\newcommand{\method}{\textsc{Counterpol}\xspace}
\newcommand{\rtarget}{R_{\text{target}}}
\newcommand{\pithetacf}{\pi_{\theta_\text{cf}}}
\newcommand{\Jpitheta}{J_{\pi_{\theta}}}
\newcommand{\xhdr}[1]{\vspace{0em}\noindent{{\bf #1.}}}
\newcommand{\ie}{\textit{i.e., }}
\newcommand{\eg}{\textit{e.g., }}
\newcommand{\std}[1]{\scriptsize{$\pm$#1}}
\newcolumntype{a}{>{\columncolor{Gray}}c}
\newcolumntype{b}{>{\columncolor{white}}c}
\definecolor{LightCyan}{rgb}{0.88,1,1}
\newcommand{\norm}[1]{\left\lVert#1\right\rVert}
\newtheorem{theorem}{Theorem}[section]
\DeclareMathOperator*{\argmax}{arg\,max}
\DeclareMathOperator*{\argmin}{arg\,min}
\icmltitlerunning{Counterfactual Explanation Policies in RL}
\begin{document}

\twocolumn[
\icmltitle{Counterfactual Explanation Policies in RL}



\icmlsetsymbol{equal}{*}


\begin{icmlauthorlist}
\icmlauthor{Shripad V. Deshmukh}{aa}
\icmlauthor{Srivatsan R.}{aa,bb}
\icmlauthor{Supriti Vijay}{aa,cc}
\icmlauthor{Jayakumar Subramanian}{aa}
\icmlauthor{Chirag Agarwal}{dd}
\end{icmlauthorlist}

\icmlaffiliation{aa}{Media and Data Science Research, Adobe}
\icmlaffiliation{bb}{Indian Institute of Technology, Madras}
\icmlaffiliation{cc}{Manipal Institute Of Technology}
\icmlaffiliation{dd}{Harvard University. Work done while at Adobe}

\icmlcorrespondingauthor{Chirag Agarwal}{chiragagarwall12@gmail.com}

\icmlkeywords{Machine Learning, ICML}

\vskip 0.3in
]



\printAffiliationsAndNotice{}  

\begin{abstract}
    As Reinforcement Learning (RL) agents are increasingly employed in diverse decision-making problems using reward preferences, it becomes important to ensure that policies learned by these frameworks in mapping observations to a probability distribution of the possible actions are explainable. However, there is little to no work in the systematic understanding of these complex policies in a contrastive manner, \ie what minimal changes to the policy would improve/worsen its performance to a desired level. In this work, we present \method, the first framework to analyze RL policies using counterfactual explanations in the form of minimal changes to the policy that lead to the desired outcome. We do so by incorporating counterfactuals in supervised learning in RL with the target outcome regulated using desired return. We establish a theoretical connection between \method and widely used trust region-based policy optimization methods in RL. Extensive empirical analysis shows the efficacy of \method in generating explanations for (un)learning skills while keeping close to the original policy. Our results on five different RL environments with diverse state and action spaces demonstrate the utility of counterfactual explanations, paving the way for new frontiers in designing and developing counterfactual policies.   
\end{abstract}
\section{Introduction}
\label{sec:intro}
Reinforcement learning (RL) has been used successfully to train autonomous agents capable of achieving better than human-level performance in simulated environments like Go~\citep{go_rl} and a suite of Atari games~\citep{atari_rl}. They are increasingly finding new applications across computational analysis~\citep{matrix},  marketing~\citep{marketing},  education~\citep{tutor}, and biomedical research~\citep{protein} and. Recent breakthroughs in large-language models (LLMs)~\citep{gpt4} are primarily attributed to key RL components that have improved the generative capability of state-of-the-art LLMs. With RL frameworks being deployed at scale as well as performing autonomously, it becomes imperative to incorporate explainability in them, resulting in increased user trust in autonomous decision-making. Explaining the decisions of black-box RL agents for a given environment state is non-trivial as it not only involves explaining the final agent action but also includes complex decision-making and planning behind the output action.

A myriad of RL explainability methods with various attribution techniques has recently been proposed~\citep{greydanus2018visualizing,deshmukhexplaining,iyer2018transparency,puri2019explain}. In particular, they focus on identifying input states and past experiences (trajectories) that led the RL agent to learn complex behaviors. While these methods output important input state features (agent's observation) and trajectories, they fail to explain the minimal change in the trained policy leading to a desired outcome or (un)learning of a specific skill. Intuitively, this requires generating \textit{counterfactuals} for a given desired outcome (\ie identifying \textit{what} and \textit{how much} to change a given RL policy to obtain a target return for its current state). While some previous works have explored causal reinforcement learning~\citep{causal_rl}, there is little to no research on systematically explaining the mechanism of the complex policies learned by a given RL agent using counterfactual explanations.

\xhdr{Present Work} We propose \method, a framework for counterfactual analysis of RL policies. In our framework, we generate explanations by asking the question: ``What least change to the current policy would improve or worsen it to a new policy with a specified target return?'' To estimate such counterfactual policies, we present an objective that aims to obtain a new policy with an average performance equal to that of a specified return while limiting its modifications with respect to the given policy. The generated policies provide direct insights into how a policy can be modified to achieve better results as well as what to avoid in order not to deteriorate the performance. Further, we theoretically prove the connection between popular trust region-based optimization methods in RL with \method, bringing a new perspective of looking at RL optimization using a prominent explainability tool. Formally, the \method learns minimal changes in the current policy without changing its general behavior. To optimize counterfactual explanation policies, we specify a novel objective function that can be solved using basic on-policy Monte Carlo policy gradients. In our experiments across diverse RL environments, we show how our algorithm reliably achieves counterfactual for any the set target return for a given policy.

\xhdr{Our Contributions} We present our contributions as follows: 1) We formalize the problem of counterfactual explanation policy for explaining RL policies. 2) We propose \method, an explanatory framework for generating contrastive explanations for RL policies that identify \textit{minimal} changes in the current policy, which would lead to improving/worsening the performance of a given policy. 3) We derive a theoretical equivalence between the \method objective with the widely used trust region-based policy gradient methods. 4) We demonstrate the flexibility of \method through empirical evaluations of explanations generated for five OpenAI gym environments. Qualitative and quantitative results show that \method successfully generates a counterfactual policy for (un)learning skills while keeping close to the original policy.
\section{Related Work}
\label{sec:related_work}
This work lies at the intersection of counterfactual explanations, explainability in RL, and proximal gradients. Next, we discuss the related work for each of these topics.

\xhdr{Counterfactual Explanations} Several techniques have been recently proposed to generate counterfactual explanations for providing recourses to individuals negatively impacted by complex black-box model decisions~\citep{wachter2017counterfactual,Ustun2019ActionableRI,van2019interpretable,mahajan2019preserving,karimi2019model}. In particular, these techniques aim to provide model explanations in the form of minimal changes to an input instance that changes the original model. Broadly, these methods are categorized based on access to the predictive model (white- vs. black-box), enforce sparsity (more interpretable explanations), and whether the counterfactuals belong to the original data manifold~\cite{verma2020counterfactual}. To this end, ~\citet{wachter2017counterfactual} is one of the most widely used methods that use a gradient-based method to obtain counterfactual explanations for models using a distance-based penalty. In this work, we extend the counterfactual explanations to RL policies. 

\xhdr{Explainability in RL} Explainable RL (XRL)~\citep{puiutta2020explainable} methods are a sub-field of explainable artificial intelligence (XAI) that aims to interpret and explain the complex behaviour of RL agents. Recent works in XRL are broadly categorized into i) gradient-based methods~\citep{greydanus2018visualizing}, which analyzed Atari RL agents that use raw visual input to make their decision and identify salient information in the image that the policy uses to select an action, ii) attribution-based methods~\citep{deshmukhexplaining, puri2019explain, iyer2018transparency}, which explain an agent’s behaviour by attributing its actions to global or local observation data, and iii) surrogate-based methods, which distill complex RL policy into simpler models such as decision trees~\citep{coppens2019distilling} or to human understandable decision language~\citep{verma2018programmatically}. While all existing XRL methods generate explanations using state features or trajectories, none explore counterfactual explanations. To the best of our knowledge, for the first time, we explore counterfactual explanation policies for contrastive analysis of policies in terms of what minimal change to the policy would get us to desired output returns.

\xhdr{Proximal Gradient Methods in Optimization} General idea behind proximal gradient methods is to solve an optimization problem $\min_x f(x)$ in the neighborhood of certain $x_{k-1}$ at $k^{th}$ optimization iteration by modifying the objective as $\min_x f(x) + \lambda \cdot \norm{x - x_{k-1}}_p$~\citep{parikh2014proximal,NIPS2014_d554f7bb}. This objective modification is termed as proximal operator. In machine learning, these operators find great significance in regularization, constraint-based optimization, convex structuring of objectives, etc. They have several applications in a multitude of fields ranging from risk-aware forecasting systems, and mathematical reasoning to reinforcement learning~\citep{9341559,asadi2023faster, pmlr-v162-ding22b, hirano2022policy, pmlr-v100-khan20a, lu2023dynamic}. One major advantage of proximal gradient methods is in their ability to handle noisy and incomplete data, which is common in real-world applications. Subsequently, in RL, they have been referred by state-of-the-art algorithms like PPO~\citep{ppo} to develop ways to achieve monotonic gains. In this light and in contrast, we incorporate a proximal operator to ensure counterfactual RL policies are closer to the original input policy. 
\section{Preliminaries}
\label{sec:prelims}

\xhdr{Supervised Learning} Consider a standard classification setup with a given set of training examples, 
\[\mathbb{X} = \{(\mathbf{x}^{(i)}, y^{(i)})~|~\mathbf{x}^{(i)} \in \mathbb{R}^d, y^{(i)} \in C\},\] 
where $i \in \{1, 2, \dots, N\}$, $C = \{1, 2, \dots, K\}\}$, $\mathbf{x}^{(i)}$ is a $d$-dimension vector, $N$ is the number of training examples, and $C$ denotes set of $K$ classes. Let a classifier $f$ be trained on $\mathbb{X}$ to predict the correct label for any unseen input $\textbf{x}$.

\xhdr{Counterfactual Explanations}
XAI literature~\citep{cfxai_survey,cfml_survey} defines counterfactual explanations as a technique to analyze  ``what if'' scenarios for model predictions, \eg a counterfactual explanation for a loan application model rejecting an individual's loan application could be ``if you increased your income by \$500, your application would have been accepted''.  Note that in the present work we work in a non-causal setup, where following the previous works~\citep{wachter2017counterfactual}, we define a counterfactual explanation for a given model prediction of input $\mathbf{x}$ as the \textit{minimal variation in the features of $\mathbf{x}$} that changes the prediction of the underlying model from $y$ to $y_{\text{target}}$. 

Formally, in for the aforementioned supervised learning setup,  \textit{counterfactual explanation} of model prediction $\hat{y}_0 = f(\mathbf{x}_0)$ conditioned on a target class $y_{\text{target}} \in C$ is given by,
\begin{equation}
    \mathbf{x}_{\text{cf}} = \argmin_{\mathbf{x}}[ \text{CE}(f(\mathbf{x}),~y_{\text{target}}) + k\cdot \norm{\mathbf{x}_{\text{0}} - \mathbf{x}}_2^2 ],
    \label{eqn:cf_sl}
\end{equation}
where cross-entropy (CE) loss guides explanation for achieving $y_{\text{target}}$ by modifiying $\mathbf{x}_0$ to $\mathbf{x}_{\text{cf}}$ and the mean-squared distance  between $\mathbf{x}_{\text{cf}}$ and $\mathbf{x}_0$ ensures the proximity between the two, regulated via coefficient $k$.

\xhdr{Reinforcement Learning} Consider a finite horizon Markov Decision Process (MDP)~\citep{mdp} defined as $\mathcal{M} = (\mathcal{S}, \mathcal{A}, P, R, d_0, \gamma)$, where $\mathcal{S}$ denotes the state-space,  $\mathcal{A}$ denotes action-space, $P: (\mathcal{S} \times \mathcal{A} \times \mathcal{S}) \xrightarrow{} [0, 1]$ denotes state transition function, $R: (\mathcal{S} \times \mathcal{A} \times \mathcal{S}) \xrightarrow{} \mathbb{R}$ denotes the reward function, $d_0: \mathcal{S} \xrightarrow[]{} [0,1]$ represents distribution over starting states, and $\gamma \in (0, 1]$ denotes the discount factor. Let $\pi: \mathcal{S} \times \mathcal{A} \xrightarrow[]{} [0, 1]$ denote the learnt agent policy. Then, we measure the performance $J_{\pi}$ of the policy in terms of expected return as follows:
\begin{equation}
    J_{\pi} = \mathbb{E}_{(s_0, a_0, s_1, a_1, \dots, s_T)}\Big[\sum_{t=0}^{T-1}{\gamma^{t}r(s_t, a_t, s_{t+1})}\Big],
    \label{eqn:j_pi}
\end{equation}
where $s_0 \sim d_0$, $a_t \sim \pi(a_t | s_t)$, and $T$ denotes the episode terminating time-step.
\section{Counterfactual Explanation Policies}
\label{sec:pol_cf}

Reinforcement learning is a powerful decision-making tool that provides a systematic abstraction for defining environment dynamics with reward preferences and also provides the algorithms to come up with policies for maximizing returns in that environment. In the context of RL agents, we pose the counterfactual question as follows -- given a policy $\pi$ performing at level $J_{\pi}$ in an MDP $\mathcal{M}$, what infinitesimal change in $\pi$ would lead us to a target return of $\rtarget$? Here, we refer to such variation of $\pi$ as \textit{Counterfactual Explanation Policy} conditioned on $\rtarget$. In posing the counterfactual question in terms of target returns, we aim to get contrastive insights into what minimal changes to the current policy can result in improving/worsening its performance to a desired level. To estimate counterfactual explanation policy $\pi_{\text{cf}}$, we first define the return penalty for achieving the target return as:
\begin{equation}
    \label{eqn:return_penalty}
    \begin{aligned}
    L_{\text{ret}} = {} & \norm{J_{\pi} - \rtarget}_p, \text{where $\norm{\cdot}_p$ is $\ell_p$-norm.}
    \end{aligned}
\end{equation}
In order to ensure that the counterfactual policy is similar to the original policy, we limit the changes while achieving the target return. We use the KL-divergence loss to measure the distance between the original policy $\pi_0$ and a given policy $\pi$, \ie, 
\begin{equation}
    \label{eqn:loss_kl}
    L_{\text{KL}} = D_{\text{KL}}(\pi_0 || \pi)
\end{equation}
Next, we can find the counterfactual policy $\pi_{\text{cf}}$ by minimizing the following objective:
\begin{equation}
    \label{eqn:loss_cf_policies}
    \begin{aligned}
        \pi_{\text{cf}} =  {} & \argmin_{\pi}~~ L_{\text{ret}} + k \cdot L_{\text{KL}} \\ = {} & \argmin_{\pi}~~ \norm{J_{\pi} - \rtarget}_p + k \cdot D_{\text{KL}}(\pi_0 || \pi),
    \end{aligned}
\end{equation}
where $k$ is a regularization coefficient that ensures that the output counterfactual policy is close to $\pi_0$.
\subsection{Counterfactual Explanation Policy Optimization}
\label{sec:cf_pol_optimization}
In practice, RL policies are represented using function approximators~\citep{fun_approx_1, fun_approx_2}, \eg a policy is represented using a neural network in deep RL. 
Let $\pi_{\theta}$ denote the policy approximated using a neural network function with parameters $\theta$. We can rewrite Eqn.~\ref{eqn:loss_cf_policies} as:
\begin{equation}
    \label{eqn:loss_cf_policies_theta}
    \begin{aligned}
        \pithetacf =  {} & \argmin_{\theta}~~ L_{\text{ret}, \theta} + k \cdot L_{\text{KL}, \theta} \\ = {} &  \argmin_{\theta}~~ \norm{\Jpitheta - \rtarget}_p + k \cdot D_{\text{KL}}(\pi_{\theta_0} || \pi_\theta)
    \end{aligned}
\end{equation}
The above optimization requires the gradients of the counterfactual objectives \textit{w.r.t.} $\theta$, \ie $\nabla_{\theta} (L_{\text{ret}, \theta} + k \cdot L_{\text{KL}, \theta})$, which can be written as $\nabla_{\theta}L_{\text{ret}, \theta} + k \cdot \nabla_{\theta} D_{\text{KL}}(\pi_{\theta_0} || \pi_\theta)$.

Computing the gradients of Eqn.~\ref{eqn:loss_cf_policies_theta} is not directly feasible using automated differentiation provided in standard deep learning libraries~\citep{pytorch,tensorflow}. Therefore, we detail the derivation of how to estimate the gradients of the two loss terms in Eqn.~\ref{eqn:loss_cf_policies_theta}.

\xhdr{Estimating $\mathbf{\nabla_{\theta}L_{\text{ret}, \theta}}$}
Without loss of generality, we calculate the gradients of the return penalty using $\ell_{1}$-norm for the return loss objective $L_{\text{ret}, \theta}$, \ie
\begin{equation}
    \label{eqn:loss_ret_grad_1}
    \begin{aligned}
    \nabla_{\theta}L_{\text{ret}, \theta} =  \nabla_{\theta}|\Jpitheta - \rtarget| = {} & \text{sgn}(\Jpitheta - \rtarget) \cdot \nabla_{\theta}\Jpitheta,
    \end{aligned}
\end{equation}
where $\nabla_{\theta}\Jpitheta$ becomes the standard policy gradient~\citep{polgrad}.

We approximate $\Jpitheta$ with the average return gathered through the on-policy rollout of $\pi_\theta$ in the given environment. Formally, for $N$ episodes sampled using $\pi_\theta$, \ie $\{\tau_i\}_{i=1}^N \sim \pi_{\theta}$ we have:
\begin{equation}
    \label{eqn:jpitheta_approx}
    \Jpitheta \approx \frac{1}{N}\sum_{i=1}^{N}\sum_{t=0}^{T_i-1}{\gamma^{t}r(s_t, a_t, s_{t+1})}
\end{equation}
Further, using the policy gradient theorem~\citep{polgrad,sutton_book}, we can write $\nabla_\theta \Jpitheta  = \mathbb{E}_{\pi_\theta}[Q_{\pi_\theta}(s, a)\nabla_\theta \text{log}(\pi_\theta(s, a))]$. We approximate $Q_{\pi_\theta}(s, a)$ term by the return obtained starting from state $s$ and action $a~{\sim}~\pi_\theta(\cdot|s)$ for a sampled episode, \ie $Q_{\pi_\theta}(s, a)~{\approx}~\sum_{j=t}^{T-1}~{\gamma^{j-t}r(s_j, a_j, s_{j+1} | s_t = s, a_t = a)}$. Therefore,
\begin{fleqn}
  \begin{equation}
    \label{eqn:polgrad_approx}
    \begin{aligned}[b]
        & \nabla_\theta \Jpitheta  \approx \frac{1}{\sum_{i=1}^{N}{T_i}} \cdot\\
        & \sum_{i=1}^{N}{\sum_{t=0}^{T_i-1}[\sum_{j=t}^{T_i-1}{\gamma^{j-t}r(s_j, a_j, s_{j+1} | s_t, a_t)] \cdot \nabla_\theta \text{log}(\pi_\theta(s_t, a_t))}}
    \end{aligned}
  \end{equation}
\end{fleqn}

Note that the $\Jpitheta$ computation can be made more sample efficient by off-policy estimation~\cite{munos2016safe, jiang2016doubly} of $\Jpitheta$ on rollouts performed using $\pi_{\theta_0}$~\cite{a2c_1,trpo,ppo}. However, for the sake of simplicity of exposition and implementation, we use On-policy Monte Carlo Policy Gradients, which can later be modified to their sample efficient versions depending on the environmental requirements.

\xhdr{Estimating $\mathbf{\nabla_\theta D_{\text{KL}}(\pi_{\theta_0}||\pi_\theta)}$} As discussed in Eqn.~\ref{eqn:loss_kl}, we calculate KL-divergence between two policy distributions as the distance penalty for counterfactual policies. As estimating the exact KL-divergence between two policies (\ie $\mathbb{E}_s[D_{\text{KL}}(\pi_0(\cdot|s)||\pi(\cdot|s))]$) is computationally expensive for large state-action spaces, we approximate it by averaging the KL-divergence calculated over states visited during sampling of $N$ episodes. Using states in $\{\tau_i\}_{i=1}^N \sim \pi_{\theta}$, we write:
\begin{equation}
\label{eqn:dkl_approx}
    D_{\text{KL}}(\pi_{\theta_0}||\pi_\theta) \approx \frac{\sum_{i=1}^{N} \sum_{t=0}^{T_i-1}D_{\text{KL}}(\pi_{\theta_0}(\cdot|s_t)||\pi_{\theta}(\cdot|s_t))}{\sum_{i=1}^{N}{T_i}}
\end{equation}
and finally approximate $\nabla_\theta D_{\text{KL}}(\pi_{\theta_0}||\pi_\theta)$ as: 
\begin{equation}
\label{eqn:grad_dkl_approx}
    \nabla_\theta D_{\text{KL}}(\pi_{\theta_0}||\pi_\theta) \approx \frac{\sum_{i=1}^{N} \sum_{t=0}^{T_i-1}\nabla_\theta D_{\text{KL}}(\pi_{\theta_0}(\cdot|s_t)||\pi_{\theta}(\cdot|s_t))}{\sum_{i=1}^{N}{T_i}}
\end{equation}

\xhdr{Additional considerations} Generating counterfactual explanation policies using the above gradients poses a significant practical challenge -- the KL term conservatively restricts the policy to the neighborhood of the original policy $\pi_{\theta_0}$ hindering the actual aim of reaching $\rtarget$. 

Intuitively, we can tune the regularization hyperparameter $k$ on the KL-weight \textit{w.r.t.} the $\rtarget$ at the start of the optimization. Nevertheless, we update the policy pivot $\pi_{\theta_0}$ iteratively to $[\pi_{\theta_1}, \pi_{\theta_2}, \dots]$ after every fixed $m$ number of gradient updates of $\pi_\theta$ to ensure it achieves the desired $\rtarget$ return. Finally, at the $i^{\text{th}}$ iteration, after $m$ steps of gradient updates of the objective:
\begin{equation}
    \label{eqn:cf_iterative_defn}
    \pi_{\theta_\text{cf}} = \argmin_{\theta} (|\Jpitheta - \rtarget| + k \cdot D_{\text{KL}}(\pi_{\theta_{i-1}} || \pi_\theta)),
\end{equation}
we change the pivot policy from $\pi_{\theta_{i-1}}$ to $\pi_{\theta_{i}}$ using $\pi_\theta$ at that step. The above process is continued till $|\Jpitheta{-}\rtarget|$ is less than a certain threshold $\delta$. In Algorithm~\ref{alg:pi_cf_estimation}, we describe the complete step-by-step algorithm of \method, our proposed framework.

\begin{algorithm}[h]
\footnotesize
\caption{\method: Counterfactual Explanation Policy Optimization}
\label{alg:pi_cf_estimation}
\SetKwInOut{Input}{Input}
\SetKwInOut{Output}{Output}

\textbf{Given:}~~$\pi_{\theta_{0}}, \rtarget, \delta, k, m, N, \eta $\\

$\pi_{\theta'} \xleftarrow[]{} \pi_{\theta_0}.\text{copy}()$ \tcp{Create copy of original policy, which will be updated}
\For{i = $0, 1, \dots$}{
    \For{j = $0, 1, \dots, m - 1$}{
        $\{\tau\}_{i=1}^{N} \sim \pi_{\theta'}$ \tcp{Sample $N$ episodes using policy under updation}

        $J_{\pi_{\theta'}} \xleftarrow[]{} \text{estimatePerformance}(\{\tau\}_{i=1}^{N})$ \tcp{using Equation~\ref{eqn:jpitheta_approx}}
        \tcc{Check if stopping criterion is met}
        \If{$|J_{\pi_{\theta'}} - \rtarget| < \delta$}{
            $\pi_{\theta_\text{cf}} \xleftarrow[]{} \pi_{\theta'}$
            
            \text{Exit both the loops.}
        }

        $
        \nabla_{\theta}L_{\text{cf},\theta} \xleftarrow[]{} \nabla_{\theta}L_{\text{ret}, \theta}\rvert_{\theta=\theta'} + k \cdot \nabla_{\theta} L_{\text{KL}}(\pi_{\theta_i} || \pi_\theta)\rvert_{\theta=\theta'}
        $ \tcp{Compute the objective gradient using Equations~\ref{eqn:polgrad_approx} and ~\ref{eqn:grad_dkl_approx}}
        $\theta' \xleftarrow[]{} \theta' - \eta \cdot \nabla_{\theta}{L_{\text{cf}, \theta}}\rvert_{\theta=\theta'}$ \tcp{Update parameters of $\pi_{\theta'}$}
    }
    
    $\pi_{\theta_{i+1}} \xleftarrow{} \pi_{\theta'}$ \tcp{Update the KL-pivot policy to $\pi_{\theta'}$}
}
\textbf{return}~~counterfactual policy $\pi_{\theta_\text{cf}}$
\end{algorithm}

\subsection{Connection between Counterfactual Explanation Policies and Trust Region Methods}
\label{sec:trpo_connection}
Trust region-based RL policy optimization methods like TRPO~\citep{trpo}, ACKTR~\citep{acktr} and PPO~\citep{ppo} have been foundational in the unprecedented success of deep RL~\citep{dota2, gpt4}. The primary aspect behind their optimization involves iteratively updating the policy within a \textit{trust} region, leading to a monotonic improvement in the policy's performance. Formally, the objective is defined as:
\begin{equation}
    \label{eqn:tr_1}
    \pi_{\theta_{i+1}} =  \argmax_{\theta}~~ J_{\pi_\theta}\text{~~such that~~} D_{\text{KL}}(\pi_{\theta_i} || \pi_{\theta}) \leq \delta
\end{equation}
The above objective can be written in its Lagrangian form using penalty instead of the constraint as: 
\begin{equation}
    \label{eqn:tr_2}
    \begin{aligned}
    \pi_{\theta_{i+1}} =  {} & \argmax_{\theta}~~ J_{\pi_\theta} - \lambda\cdot D_{\text{KL}}(\pi_{\theta_i} || \pi_{\theta}), \\
    {} & \text{where $\lambda$ is treated as a hyper-parameter.}
    \end{aligned}
\end{equation}

Next, we show the equivalence between counterfactual explanation policy and the policy obtained using a trust region-based policy gradient method.
\begin{theorem}(Equivalence)
\label{thm:equivalence}
Let $\rtarget$ be the maximum possible return in the MDP under consideration. Then, for any $L_{\text{ret}}$ estimated using $\ell_1$-norm, the generated counterfactual explanation policy through iterative KL-pivoting is equivalent to optimizing the policy for best return using a trust region-based policy gradient method.
\end{theorem}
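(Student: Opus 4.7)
My plan is to reduce the counterfactual objective in Eqn.~\ref{eqn:cf_iterative_defn} to the Lagrangian trust region objective in Eqn.~\ref{eqn:tr_2} by exploiting the hypothesis that $\rtarget$ equals the maximum attainable return in $\mathcal{M}$. The argument will be essentially algebraic: the only conceptual content is the reason that the $\ell_1$ absolute value collapses, together with a matching of the outer pivot update in Algorithm~\ref{alg:pi_cf_estimation} with the outer iteration $\pi_{\theta_i} \to \pi_{\theta_{i+1}}$ of a trust region method.

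First I would observe that if $\rtarget$ is the maximum achievable return, then for every parameter $\theta$ we have $\Jpitheta \leq \rtarget$, so the $\ell_1$ return penalty simplifies to $L_{\text{ret},\theta} = |\Jpitheta - \rtarget| = \rtarget - \Jpitheta$. Substituting into Eqn.~\ref{eqn:cf_iterative_defn} and noting that $\rtarget$ is an additive constant in $\theta$ which does not affect the argmin, I would write
\begin{align*}
\pithetacf &= \argmin_{\theta}\bigl(\rtarget - \Jpitheta + k \cdot \Dkl(\pi_{\theta_{i-1}} \| \pi_\theta)\bigr) \\
&= \argmax_{\theta}\bigl(\Jpitheta - k \cdot \Dkl(\pi_{\theta_{i-1}} \| \pi_\theta)\bigr),
\end{align*}
which coincides with Eqn.~\ref{eqn:tr_2} after identifying the Lagrange multiplier $\lambda$ with $k$ and the trust region pivot $\pi_{\theta_i}$ with the KL-pivot $\pi_{\theta_{i-1}}$.

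Finally, I would argue that the iterative KL-pivoting protocol described just before Eqn.~\ref{eqn:cf_iterative_defn} is structurally identical to the outer loop of a trust region method: both alternate between (i) holding a pivot policy fixed and performing gradient steps on a KL-regularized surrogate of the return, and (ii) refreshing the pivot to the policy just obtained. The step I expect to require the most care is arguing that the equivalence is genuinely iterate-by-iterate rather than only objective-by-objective; this follows once we observe that both procedures solve the same inner variational problem under the same pivot refresh schedule, so their iterates coincide given matched initialization and step sizes. I would also remark on why the $\ell_1$ norm is essential here: an $\ell_2$ penalty would yield a gradient scaled by the current gap $(\Jpitheta - \rtarget)$, which would break the exact correspondence with the trust region gradient even though maximization of $\Jpitheta$ would still be implied.
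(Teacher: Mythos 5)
Your proposal is correct and follows essentially the same route as the paper: using $\rtarget = R_{\max}$ to collapse $|\Jpitheta - \rtarget|$ to $\rtarget - \Jpitheta$, dropping the additive constant, flipping argmin to argmax, and identifying $k$ with the Lagrange multiplier $\lambda$ in Eqn.~\ref{eqn:tr_2}. Your additional remarks on the iterate-by-iterate matching of the pivot schedule and on why the $\ell_1$ (rather than $\ell_2$) penalty is needed go slightly beyond the paper's argument but do not change the approach.
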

\begin{proof}
For a given $\rtarget$, let us assume that the desired return from a policy is equal to the maximum possible return, \ie $\rtarget = R_{\text{max}}$, and the return penalty calculated using $\ell_1$-norm, we rewrite the counterfactual generation objective using KL-pivoting~ Eqn.~\ref{eqn:cf_iterative_defn} as:
\begin{equation}
\label{eqn:proof_1}
    \pi_{\theta_{\text{cf}}} = \argmin_{\theta} (|J_{\pi_\theta} - R_{\text{max}}| + k \cdot D_{\text{kl}}(\pi_{\theta_i} || \pi_\theta)) 
\end{equation}
We have $J_{\pi_\theta}=\mathbb{E}_{\pi_\theta}[\sum_{t=0}^{T-1}\gamma^t r(s_t, a_t, s_{t+1})| s_0 {\sim} d_0, a_t \sim \pi_\theta(\cdot| s_t))]$ as defined in Eq.~\ref{eqn:j_pi}. Now, let $R{=}\sum_{t=0}^{T-1}\gamma^t r(s_t, \\a_t, s_{t+1})$ denote the discounted return for the episode $(s_0, a_0, s_1, a_1, \dots, s_T)$, then $J_{\pi_\theta}=\mathbb{E}_{\pi_\theta}[R]$. Further,
\[ 
    R \leq R_\text{max} \implies J_{\pi_\theta} = \mathbb{E}_{\pi_\theta}[R] \leq \mathbb{E}_{\pi_\theta}[R_\text{max}] = R_\text{max}
\]
Hence, $J_{\pi_\theta} - R_\text{max} \leq 0$, or $R_\text{max} - J_{\pi_\theta} \geq 0$, which allows us to write $|J_{\pi_\theta} - R_\text{max}|$ simply as $(R_\text{max} - J_{\pi_\theta})$. Rewriting Eqn.~\ref{eqn:proof_1}, we get:
\begin{equation}
\label{eqn:proof_2}
\begin{aligned}
\pi_{\theta_{\text{cf}}}
= {} & \argmin_{\theta}~~ R_\text{max} - J_{\pi_\theta} + k \cdot D_{\text{KL}}(\pi_{\theta_i} || \pi_\theta) \\
= {} & \argmin_{\theta}~~ - J_{\pi_\theta} + k \cdot D_{\text{KL}}(\pi_{\theta_i} || \pi_\theta) \\
= {} & \argmax_{\theta}~~ J_{\pi_\theta} - k \cdot D_{\text{KL}}(\pi_{\theta_i} || \pi_\theta)
\end{aligned}
\end{equation}
Comparing the above equations with Eqn.~\ref{eqn:tr_2} and treating the hyper-parameters $k$ and $\lambda$ interchangeably, we find that both these objectives of policy updation are the same, which completes our proof.
\end{proof}
Consequently, the trust region-based policy gradient methods can be interpreted as finding a \textit{counterfactual explanation policy} of the original policy, which achieves the maximum possible return (\ie the best performance). This also opens up possibility for using sophisticated ways to choose distance regulation parameter $k$ similar to $\lambda$ following ~\cite{cpi}.
\section{Experiments and Results}
\label{sec:expts}
Here, we study the counterfactual explanations of policies trained using policy gradient methods, particularly the actor-critic methods~\citep{sutton_book}, as they converge more faithfully compared to vanilla policy gradients and also maintain the simplicity required to analyze them in a contrastive fashion rigorously.

\xhdr{Setup} We employ the widely used actor-critic algorithm of Advantage Actor-Critic (A2C)~\citep{a2c_1}, and only in the case of complex environments, we shift to Proximal Policy Optimization (PPO)~\citep{ppo} for training the original policies in our empirical analysis. We use the standard implementations provided in stable-baselines library~\footnote{Documentation for the library: \url{https://stable-baselines3.readthedocs.io/en/master/}}~\citep{stable-baselines3} for RL training using the above-mentioned algorithms. We conduct our analysis in five OpenAI gym environments~\citep{openaigym}, \textit{viz.} i) \textit{CartPole-v1}, ii) \textit{Acrobot-v1}, iii) \textit{Pendulum-v1}, iv) \textit{LunarLander-v2} and v) \textit{BipedalWalker-v3}~\footnote{Additional environment details can be found at \url{https://gymnasium.farama.org/environments/}}. 

\xhdr{Implementation details}
We set the threshold return values of $\delta$ for the five environments to $\{10, 2.5, 37.5, 5, 10\}$ according to the order of magnitude of returns in their respective environment,  the number of KL-pivoting policy iterations $m$ as 10 for all environments (except BipedalWalker, where $m=5$), and the number of trajectory rollouts $N$ to 10 for all four environments except BipedalWalker, where $N=2$. Further, we chose the distance regularizer parameter $k$ values as $\{10, 1, 10^5, 10, 1\}$ for the five environments in the same order as discussed above. For computation purposes, we use a single NVIDIA A100 (40GB) GPU. We share the code for generating counterfactual explanation policies using \method in the supplementary material.
\begin{table*}[ht]
\fontsize{7.5pt}{7.5pt}\selectfont
\centering
\setlength{\tabcolsep}{0.5pt}
\renewcommand{\arraystretch}{1.3}
\caption{\textbf{\method Optimization on Control Environments.} Counterfactual explanations for a given policy $\pi_0$ with performance $J_{\pi_0}$ and three target returns. Shown are the number of outer policy updates ($n_{\pi}$) for generating counterfactuals and the resulting performance of the counterfactual policy $\pi_{\theta_{cf}}$. \method faithfully estimates the counterfactuals across all the $\pi_0$-$\rtarget$ pairs and diverse environments.
}
\label{tab:control_env_results}
\begin{tabular}{lccccccccc}
\toprule
\multicolumn{10}{c}{CartPole-v1} \\
\cmidrule{1-10}
$J_{\pi_{\theta_o}}$ & \multicolumn{3}{c}{235.6} & \multicolumn{3}{c}{368.2} & \multicolumn{3}{c}{500.0} \\ 
$R_\text{target}$ & 50 & 250 & 450 & 50 & 250 & 450 & 50 & 250 & 450 \\
$n_{\pi}$ & 278.0\std{22.9} & 10.0\std{6.5} & 303.3\std{67.1} & 108.0\std{16.6} & 1340.0\std{278.8} & 68.7\std{5.0} & 833.3\std{56.3} & 721.3\std{14.7} & 557.3\std{10.0} \\
\rowcolor{gray!30}
$J_{\pi_{\theta_\text{cf}}}$ & 48.6\std{4.9} & 245.0\std{4.4} & 450.6\std{7.5} & 48.2\std{3.4} & 245.5\std{1.9} & 453.5\std{0.5} & 56.4\std{2.4} & 246.0\std{4.7} & 452.1\std{1.0} \\
\midrule
\multicolumn{10}{c}{Acrobot-v1} \\
\cmidrule{1-10}
$J_{\pi_{\theta_o}}$ & \multicolumn{3}{c}{-146.7} & \multicolumn{3}{c}{-89.0} & \multicolumn{3}{c}{-84.3} \\
$R_\text{target}$ & -120 & -100 & -80 & -120 & -100 & -80 & -120 & -100 & -80 \\
$n_{\pi}$ & 44.0\std{18.8} & 31.3\std{1.9} & 18.0\std{12.8} & 62.0\std{34.8} & 38.0\std{37.3} & 6.0\std{2.8} & 117.3\std{95.3} & 26.0\std{36.8} & 8.7\std{5.7} \\
\rowcolor{gray!30}
$J_{\pi_{\theta_\text{cf}}}$ & -119.5\std{1.3} & -100.5\std{2.1} & -80.6\std{1.6} & -119.1\std{0.5} & -99.9\std{0.2} & -80.6\std{0.5} & -120.1\std{1.5} & -99.8\std{1.8} & -81.6\std{0.3} \\
\midrule
\multicolumn{10}{c}{Pendulum-v1} \\
\cmidrule{1-10}
$J_{\pi_{\theta_o}}$ & \multicolumn{3}{c}{-853.5} & \multicolumn{3}{c}{-792.6} & \multicolumn{3}{c}{-568.0} \\
$R_\text{target}$ & -1000 & -750 & -500 & -1000 & -750 & -500 & -1000 & -750 & -500 \\
$n_{\pi}$ & 1.3\std{0.9} & 15.3\std{14.8} & 174.7\std{38.0} & 6.0\std{3.3} & 10.0\std{11.4} & 198.7\std{157.4} & 113.3\std{139.1} & 22.0\std{25.7} & 6.0\std{3.1} \\
\rowcolor{gray!30}
$J_{\pi_{\theta_\text{cf}}}$ & -996.9\std{19.6} & ~-773.5\std{6.4} & ~-507.9\std{19.7} & ~-992.1\std{25.9} & -777.1\std{6.2} & -507.3\std{7.1} & -997.6\std{15.1} & -764.9\std{19.8} & ~-501.0\std{8.8} \\
\bottomrule
\end{tabular}
\end{table*}
\begin{figure*}[h]
    \begin{flushleft}
        \footnotesize
        \hspace{0.4cm}\rotatebox{90}{\hspace{-6.1cm}{Scenario 1}\hspace{0.58cm}{Scenario 2}\hspace{0.8cm}{Scenario 3}} \hspace{1.4cm}$\pi_{0}$\hspace{2.4cm}$\text{R}_{\text{target}}=100$\hspace{1.3cm}$\text{R}_{\text{target}}=150$\hspace{1.6cm}$\text{R}_{\text{target}}=0$\hspace{1.6cm}$\text{R}_{\text{target}}=-50$
    \end{flushleft}
    \centering
    \includegraphics[width=0.9\textwidth]{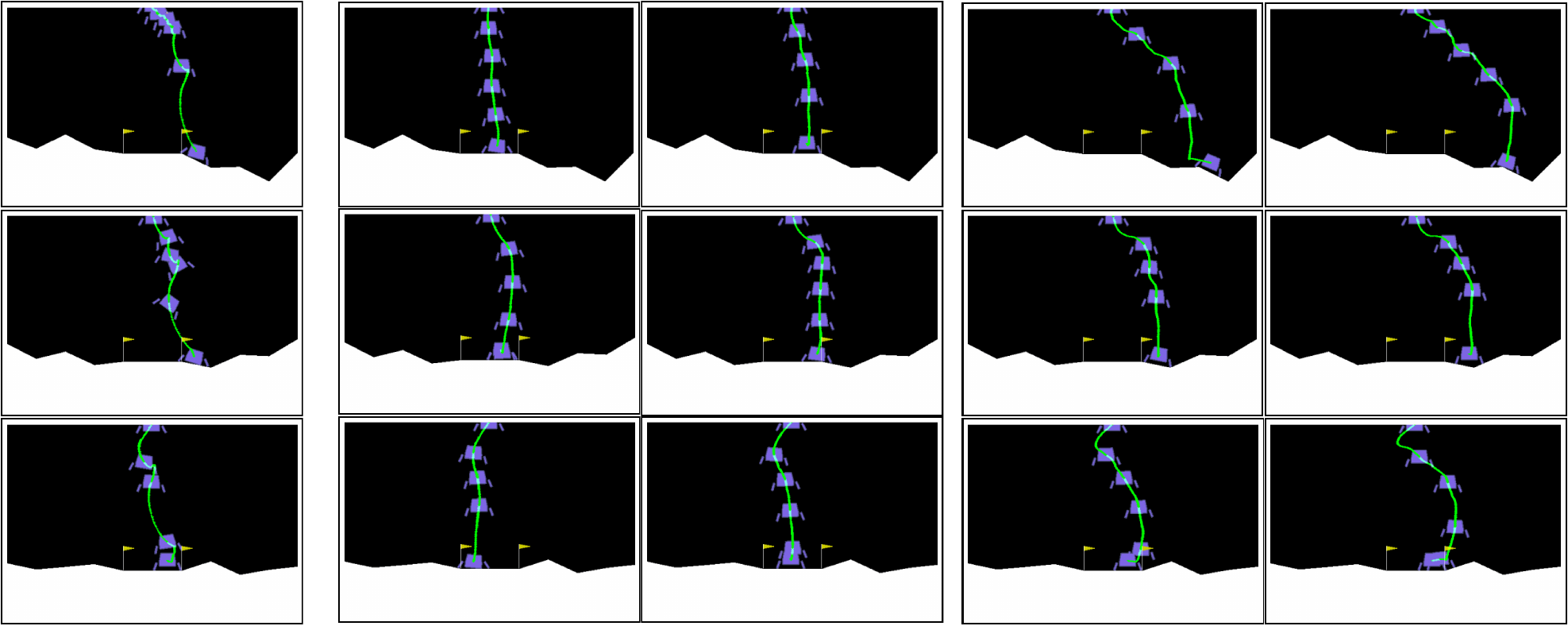}
    \caption{\textbf{\method on LunarLander-v2.} We showcase rollouts of different policies trained to land the rover by sampling frames at every 5$^{\text{th}}$ step and tracking the rover's center in \textcolor{green}{green}. Slight modifications to the original policy can improve (columns 2-3) or worsen (columns 4-5) its overall returns.}
    \label{fig:lunarlander_qualitative}
\end{figure*}
\subsection{Results on \method Optimization}
\looseness=-1
We conduct experiments to understand and verify the optimization of our proposed \method framework. In doing so, we present the results of \method optimization on A2C-trained agents of \textit{CartPole-v1}, \textit{Acrobot-v1}, and \textit{Pendulum-v1} environments. We sample three distinct policy checkpoints from the A2C training of each RL environment and generate counterfactual policies using \method for these checkpoints and three different target returns $\rtarget$ (chosen with respect to the RL environment). We train each tuple of the RL environment, A2C checkpoint, and target return for three different seeds to reduce variance arising from stochasticity. Our results in Table~\ref{tab:control_env_results} show that \method faithfully achieves target returns for diverse starting checkpoints across all environments, \ie \method converges to the target return value generating $\pi_{\text{cf}}$ that obtains return very close to the given $\rtarget$. Further, we find an intuitive trend in the number of outer policy (KL-pivoting) updates, where we 
observe a lesser number of outer policy updates when the $\rtarget$ value is closer to the performance of the original policy $J_{\pi_{\theta_0}}$. Our results on these standard control environments form the basis for further investigation of more complex environments having a discrete or a continuous action space, which we explore in the next section.

\subsection{Contrastive Insights into RL policies}
Next, we present our analysis on generating counterfactual explanations using more complex environments.

\looseness=-1
\xhdr{1) Lunar Lander} We train an RL agent on \textit{LunarLander-v2 }using A2C, and intervene the training to retrieve the policy $\pi_0$ for our contrastive analysis. The original policy, on average, achieves a return of 50 (refer Figure~\ref{fig:lunarlander_quantitative}). We then generate its counterfactuals for target returns of $\{100, 150\}$ to understand how the policy can be improved further and also for target returns of $\{0, -50\}$ to understand how the policy can become worse. We present landing scenarios of the policy $\pi_0$ on three different surfaces in Figure~\ref{fig:lunarlander_qualitative} and their respective contrastive explanations for improvement and deterioration. Our results for $\pi_0$ rollouts show interesting agent characteristics like ``slow start'', ``a quick fall after covering half the distance,'' and ``landing near the right flag''. We find an improved version of the original policy in the generated counterfactuals with $\rtarget=100$ (Fig.~\ref{fig:lunarlander_qualitative}; column 2), where we observe that a uniform descent and shifting the landing slightly to the left between the two flags can improve the original policy $\pi_0$. Further, the counterfactual with  $\rtarget=150$ (Fig.~\ref{fig:lunarlander_qualitative}; column 3) shows uniform descent with decelerated landing can lead to even higher gains. In contrast, $\rtarget=0$ (Fig.~\ref{fig:lunarlander_qualitative}; column 4) shows how by starting very fast and making \textit{free fall} before landing outside the space between flags, $\pi_0$ can go worse. Similarly, $\rtarget=-50$ (Fig.~\ref{fig:lunarlander_qualitative}; column 5) show how the policy could worsen/collapse by making the agent land further right. Notably, the generated counterfactuals for targeted deterioration of performance using \method can be interpreted as a robust way to unlearn~\citep{unlearning, unlearning_survey} RL skills as it might be required to forget certain aspects of learning(\eg in Fig.~\ref{fig:lunarlander_qualitative}, the slow start of $\pi_0$).

\xhdr{2) Bipedal Walker} For the BipedalWalker-v3 environment, we train a PPO agent that demonstrates early success in achieving walking behavior. We conduct experiments to analyze the trained Bipedal agent contrastively by estimating counterfactuals at target returns of 50 and 150 (refer Fig.~\ref{fig:bipedal_convergence}). We demonstrate the qualitative results in Figure~\ref{fig:bipedal_qualitative}, where we find that the given policy $\pi_0$ has a peculiar walk, kneeling on the right leg and taking stride with the left one. In contrast, our generated counterfactual policy using $\rtarget=150$ shows improved (upright and faster) walk of the Bipedal agent. Further, when we reduce the target return to $\rtarget=50$, we observe that the kneeling gets intensified and the agent starts to drag itself to the finishing line, making the agent to walk slow and also fall (as shown in Scenario 1).

Across both environments, we observe that \method generates counterfactual policies that look similar to the original policy, keeping the essential characteristics of the

\begin{figure}[H]
    \centering
    \begin{subfigure}[b]{0.36\textwidth}
        \centering
        \includegraphics[width=\textwidth]{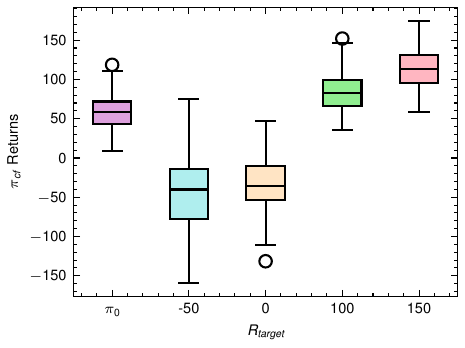}
        \caption{LunarLander returns for $\pi_0$ and its counterfactuals. Results aggregated over 100 surfaces.}
        \label{fig:lunarlander_quantitative}
    \end{subfigure}
    \hfill
    \begin{subfigure}[b]{0.35\textwidth}
        \centering
        \includegraphics[width=\textwidth]{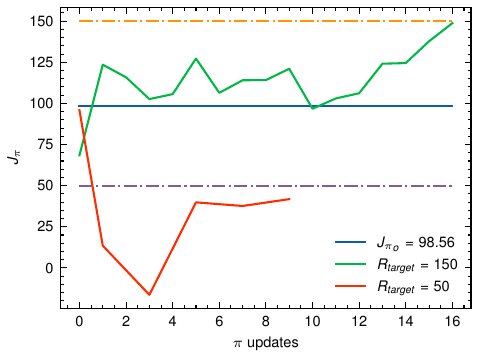}
        \caption{BipedalWalker counterfactual convergence on target returns of 150 and 50.}
        \label{fig:bipedal_convergence}
    \end{subfigure}
    \caption{\textbf{\method Quantitative Analysis on Box-2D Environments.} (a) shows the box plot for LunarLander returns and (b) depicts the convergence of our optimization algorithm on BipedalWalker. Once again, we find that \method successfully reaches the given target returns even with increased environment complexity.}
    \label{fig:quantitative_plots}
\end{figure}
policy the same. This enables us to easily contrast between different versions of the same policies.
\begin{figure*}
    \begin{flushleft}
        \hspace{3.1cm}{Scenario 1}\hspace{5.2cm}{Scenario 2}
    \end{flushleft}
  \begin{minipage}[b]{0.49\textwidth}
    \begin{subfigure}[b]{\textwidth}
    \includegraphics[width=\textwidth]{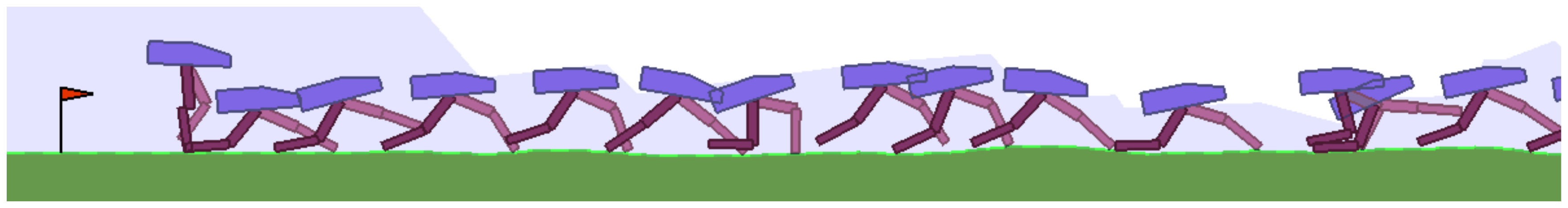}
    \caption{$\pi_0$~($J_{\pi_{\theta_0}}=100$)}
    \end{subfigure}
    \begin{subfigure}[b]{\textwidth}
    \includegraphics[width=\textwidth]{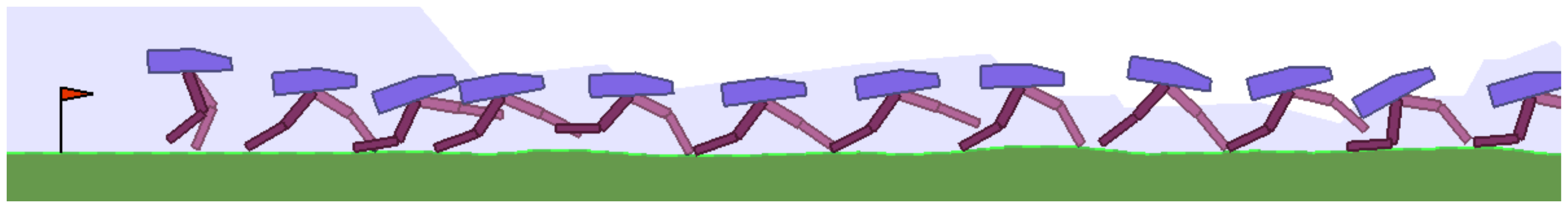}
    \caption{$\rtarget = 150$ (faster and more upright)}
    \end{subfigure}
    \begin{subfigure}[b]{\textwidth}
    \includegraphics[width=\textwidth]{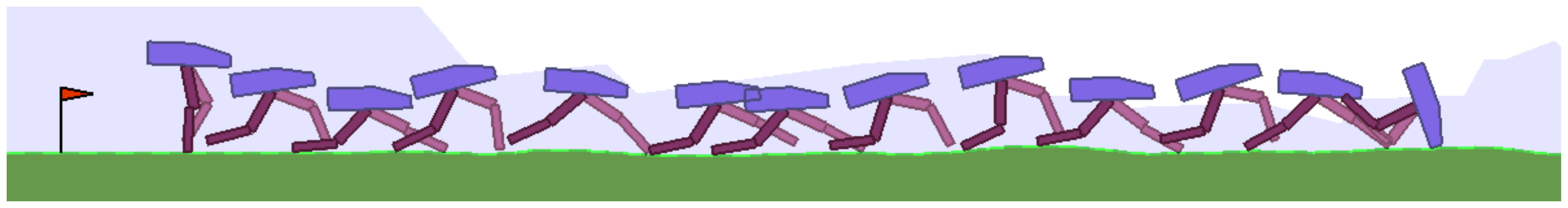}
    \caption{$\rtarget = 50$ (falling before the finish)}
    \end{subfigure}
  \end{minipage}
  \begin{minipage}[b]{0.49\textwidth}
      \begin{subfigure}[b]{\textwidth}
    \includegraphics[width=\textwidth]{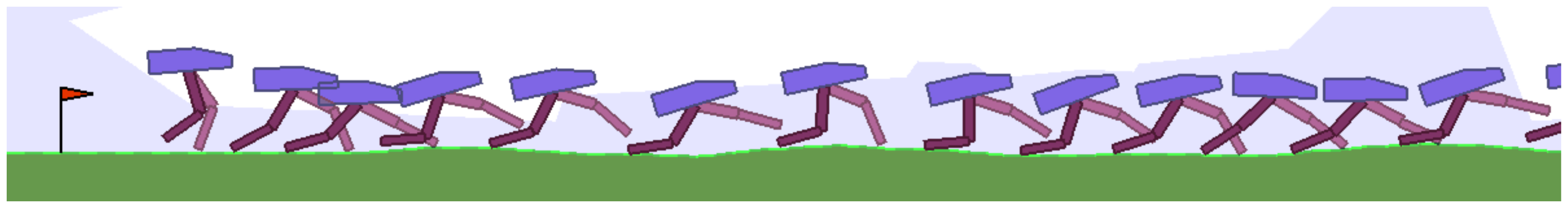}
    \caption{$\pi_0$~($J_{\pi_{\theta_0}}=100$)}
    \end{subfigure}
    \begin{subfigure}[b]{\textwidth}
    \includegraphics[width=\textwidth]{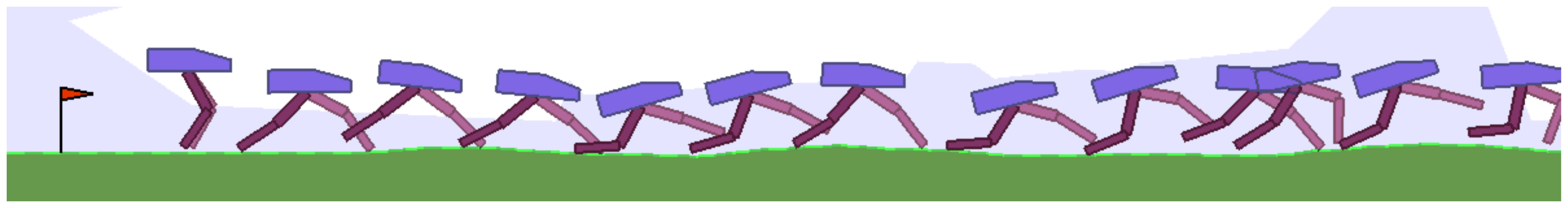}
    \caption{$\rtarget = 150$ (faster and more upright)}
    \end{subfigure}
    \begin{subfigure}[b]{\textwidth}
    \includegraphics[width=\textwidth]{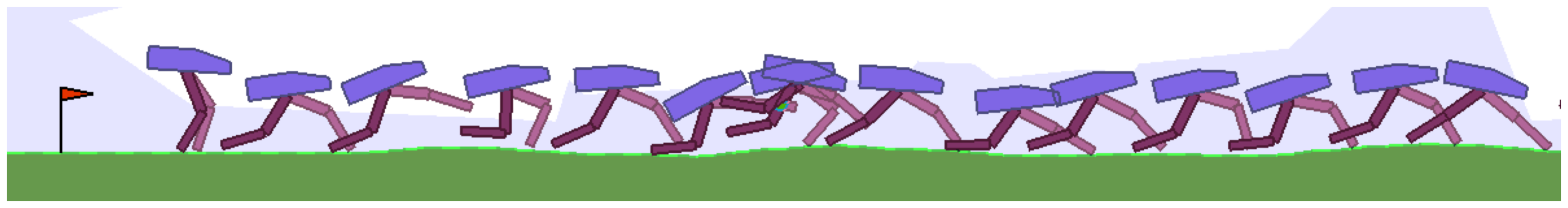}
    \caption{$\rtarget = 50$ (increased dragging on right knee)}
    \end{subfigure}
  \end{minipage}
  \caption{\textbf{\method on BipedalWalker-v3.} Shown are the rollouts of PPO agent trained on BipedalWalker-v3 environment and their respective counterfactuals generated using \method by sampling at every 50$^{\text{th}}$ step. Counterfactuals generated using \method demonstrate contrastive behavior compared to their original policy: i) improved pace and uprightness (Fig.~(b)) of the agent for $\rtarget=150$ and ii) increased kneeling (Fig.~(f)) resulting in worse walking for $\rtarget=50$. Refer to supplementary material for more qualitative results.}
  \label{fig:bipedal_qualitative}
\end{figure*}
\section{Conclusion and Discussion}
\label{sec:conclusion}
In this work, we present a systematic framework for contrastive analysis of reinforcement learning policies. In doing so, we propose \method for generating counterfactual explanations for a given RL policy. We  demonstrate the flexibility of \method across multiple RL benchmarks and find insights into RL policy learning. We carry out a detailed theoretical analysis to connect the counterfactual estimation with eminent trust region optimization methods in RL. Further, results across five OpenAI gym environments show that \method generates faithful counterfactuals for (un)learning skills while keeping close to the original policy. To the best of our knowledge, our work presents the first technique to generate counterfactual explanation policies. Overall, we believe our work paves the way toward a new direction in the methodical contrastive analysis of RL policies and brings more transparency to RL decision-making. In our present work, we mainly used on-policy policy gradient-based RL optimization techniques, which are not very sample efficient, and we assume the stochastic nature of the policies which might not be the case always. It would be interesting to explore the utility of \method in unlearning options (skills) in hierarchical RL~\citep{hrl,hrl_survey}. Also, \method gives a glimpse into return contours in RL which could be further explored for enhanced optimization.

\bibliography{references}
\bibliographystyle{icml2023}



\end{document}